\def\x{{\mathbf x}}
\def\W{{\mathbf W}}
\def\tcW{\widehat{\bs{\mathcal W}}}
\def\calW{\bs{\mathcal W}}
\def\I{{\mathbf I}}
\def\D{{\cal D}}
\def\G{{\cal G}}
\def\V{{\cal V}}
\def\E{{\cal E}}
\def\N{{\cal N}}
\def\y{{\mathbf{y}}}
\def\R{\mathbb{R}}
\def\bTheta{\bs{\Theta}}
\def\bPhi{\bs{\Phi}}
\def\bPsi{\bs{\Psi}}
\def\bvtheta{\bs{\vartheta}}
\def\dbvtheta{\dot{\bs{\vartheta}}}
\def\f{\mathsf{f}}
\newcommand{\btheta}{\boldsymbol{\theta}}
\newcommand{\bs}[1]{\boldsymbol{#1}}
\DeclareMathOperator*{\argmin}{arg\,min}
\newtheorem{prop}{Proposition}
\newtcolorbox{mystyle}[1]{
  colback=white,
  boxrule=0.5mm,        
  left=0mm,             
  right=0mm,            
  top=0mm,              
  bottom=0mm,           
  title={\centering #1},
}
\def\BibTeX{{\rm B\kern-.05em{\sc i\kern-.025em b}\kern-.08em
    T\kern-.1667em\lower.7ex\hbox{E}\kern-.125emX}}
\newcommand{\linebreakand}{%
  \end{@IEEEauthorhalign}
  \hfill\mbox{}\par
  \mbox{}\hfill\begin{@IEEEauthorhalign}
}
\begin{document}
\title{Peer-to-Peer Learning Dynamics of Wide Neural Networks}

\author{\IEEEauthorblockN{Shreyas Chaudhari\IEEEauthorrefmark{1} \qquad
Srinivasa Pranav\IEEEauthorrefmark{1} \qquad Emile Anand\IEEEauthorrefmark{2} \qquad
Jos\'e M. F. Moura\IEEEauthorrefmark{1}}
\IEEEauthorblockA{\IEEEauthorrefmark{1}Electrical and Computer Engineering,
Carnegie Mellon University\\
\IEEEauthorrefmark{2}School of Computer Science,
Georgia Institute of Technology \\
\{\texttt{shreyasc@andrew.cmu.edu}, \texttt{spranav@cmu.edu}, \texttt{emile@gatech.edu}, \texttt{moura@andrew.cmu.edu} \}}
\thanks{Published at IEEE International Conference on Acoustics, Speech and Signal Processing (ICASSP), 2025. DOI: \href{https://ieeexplore.ieee.org/document/10890126}{10.1109/ICASSP49660.2025.10890126}. Authors partially supported by NSF Graduate Research Fellowships (DGE-1745016, DGE-2140739), NSF Grant CCF-2327905, and ARCS Fellowship.}
}

\maketitle

\begin{abstract}
Peer-to-peer learning is an increasingly popular framework that enables beyond-5G distributed edge devices to collaboratively train deep neural networks in a privacy-preserving manner without the aid of a central server. Neural network training algorithms for emerging environments, e.g., smart cities, have many design considerations that are difficult to tune in deployment settings -- such as neural network architectures and hyperparameters. This presents a critical need for characterizing the training dynamics of distributed optimization algorithms used to train highly nonconvex neural networks in peer-to-peer learning environments. In this work, we provide an explicit characterization of the learning dynamics of wide neural networks trained using popular distributed gradient descent (DGD) algorithms. Our results leverage both recent advancements in neural tangent kernel (NTK) theory and extensive previous work on distributed learning and consensus. We validate our analytical results by accurately predicting the parameter and error dynamics of wide neural networks trained for classification tasks.
\end{abstract}
\begin{IEEEkeywords}
    Peer-to-Peer Learning, Federated Learning, Distributed Optimization, Neural Tangent Kernel, Gradient Flow
\end{IEEEkeywords}

\section{Introduction}
\label{sec:intro}
The peer-to-peer learning framework is an important step toward realizing smart ecosystems~\cite{kairouz2021advances,6Gedge,HVPoorIoTSurvey}. As distributed devices are increasingly gathering vast amounts of data, advancements in hardware are enabling on-device training of neural networks in a privacy-preserving manner~\cite{Camaroptera, deeplearningonmobiledevices}. In beyond-5G environments, iterative consensus-based algorithms facilitate collaborative learning for smart devices, which train deep neural networks using locally-available training data and exchange model parameters with nearby devices. 

Tuning peer-to-peer learning algorithms for deployment is often resource-intensive, particularly in terms of communication cost. For example, the process of fine-tuning classifiers for domain adaptation tasks~\cite{weiss2016survey} is highly sensitive to hyperparameters and can require several trials, even with fixed, pretrained feature extractors~\cite{labonte2023lastlayerretraininggrouprobustness,kirichenko2023layerretrainingsufficientrobustness}. Simulations, while useful, demand significant computational power and can struggle to accurately model the inherent randomness of wireless networks~\cite{6Gedge}.
Therefore, an analytic characterization of the distributed training process would enable efficient and informed decision-making for neural network architecture design, hyperparameter tuning, and computation and communication optimization.

While many works have examined distributed consensus, estimation, and optimization algorithms in diverse settings~\cite{tsitsiklis1986distributed,consensus+innovations,jakovetic2014fast,anand2024efficientreinforcementlearningglobal}, the inherent nonconvexity of modern neural networks makes it especially challenging to analyze peer-to-peer deep learning~\cite{pranav2023, koloskova2020unified}.
When first-order, gradient-based distributed training algorithms are applied to a linear learning model, the resulting discrete-time dynamics and corresponding continuous-time gradient flows lead to tractable characterizations of the parameter and loss evolution; however, the behavior of nonlinear models scale in complexity~\cite{SoummyaSwensonGradientFlow, MirrorDescentNonlinearHassibiJournal}. To facilitate theoretical analysis for nonlinear neural networks, we leverage Neural Tangent Kernel (NTK) theory. In particular, we use the fact that wide (overparameterized) neural networks behave similarly to models that are linear with respect to their parameters~\cite{jacot2018neural,lee2019wide}. Even for highly performant neural networks, the NTK parameterization implies that as the layers of the neural network become wide, a certain kernel matrix specifying the gradient flow becomes time-invariant and leads to linear dynamics that can be readily analyzed.
 
 \textbf{Contributions}: We consider peer-to-peer learning settings with agents using distributed gradient descent (DGD)~\cite{nedic2009distributed} to train overparameterized (wide) neural networks. We propose a framework based on NTK theory to study the gradient flows -- continuous time generalizations of the discrete gradient descent updates -- of various consensus and diffusion-based forms of DGD. Our results enable the dynamics of the parameters at each agent to be computed analytically. The method we propose applies to a broad range of practical scenarios, including situations where each agent either learns the parameters of a wide neural network or fine-tunes the final layer of a neural network. Finally, we provide experimental simulations where we show strong agreement between the predicted dynamics and the true training dynamics for distributed learning tasks involving neural networks that are nonconvex in their parameters.

\textbf{Notation}: We denote matrices in bold uppercase, e.g., $\mathbf{W}, \boldsymbol{\mathcal{W}}$, and vectors in bold lowercase, e.g., $\mathbf{x}$. We let $\|\mathbf{x}\|_2$ be the $\ell_2$-norm of a vector $\mathbf{x}$. We denote the Kronecker product of matrices $\mathbf{A}\in \R^{m\times n}$ and $\mathbf{B}\in \R^{p\times q}$ by $\mathbf{A}\otimes\mathbf{B} \in \R^{pm\times qn}$. For $\btheta \in \R^p$, $\frac{\partial f}{\partial \btheta} := [\frac{\partial f}{\partial \btheta_1} \dots \frac{\partial f}{\partial \btheta_p}]$ is the Jacobian. Finally, we denote the
$P\times 1$ all-ones vector by $\mathbf{1}_P$ and the $P\times P$ identity matrix by $\mathbf{I}_P$.

\section{Neural Tangent Kernel}
\label{sec:ntk}
We first briefly summarize the Neural Tangent Kernel, which motivates the analytical results presented in subsequent sections. Let $f(\x; \btheta): \R^N \to \R^M$ be a feed-forward neural network of the form
\begin{align}
   \x^{(l+1)} &= \sigma^{(l)} \left( \W^{(l)} \x^{(l)} + \mathbf{b}^{(l)}\right)  \label{eq:ntk_nn}
\end{align}
where $\W^{(l)} \in \R^{n_{l+1} \times n_{l}}, \mathbf{b}^{(l)} \in \R^{n_{l+1}}$, and $\sigma^{(l)}$ respectively denote the weight, bias, and elementwise activation function at layer $l$. The weights and biases are parameterized as 
\begin{align}
    \W^{(l)} &= \frac{s_{W}}{\sqrt{n}_{l}} \mathsf{W}^{(l)},\;\mathbf{b}^{(l)} = s_b \mathsf{b}^{(l)}
\end{align}
where $\mathsf{W}^{(l)}$ and $\mathsf{b}^{(l)}$ are the trainable parameters at each layer. To initialize the trainable parameters, each entry is independently sampled from the standard normal distribution, i.e., $\mathsf{W}_{i,j}^{(l)}, \mathsf{b}_{i}^{(l)} \sim \N(0, 1)$. The parameters $s_{W}$ and $s_b$ are fixed weight and bias standard deviations that are specified at initialization and kept fixed during training. The aforementioned parameterization normalizes both the forward and backward propagation dynamics of a neural network and does not affect the space of parameterized functions \cite{lee2019wide, jacot2018neural}. The empirical neural tangent kernel associated with $f$ is defined as
\begin{align}
    \widehat{\bTheta}(\x, \x') \triangleq \left(\frac{\partial f(\x; \btheta)}{\partial \btheta}\right)\left(\frac{\partial f(\x';\btheta)}{\partial \btheta}\right)^\top
\end{align}
Ref.~\cite{jacot2018neural} proves that if $f$ employs the parameterization described, then the empirical NTK converges, in probability, to a fixed limiting kernel $\bTheta$ as the layer widths tend to infinity. Furthermore, ref.~\cite{jacot2018neural} shows that the empirical NTK asymptotically remains constant during training, thereby implying that wide neural networks are linear with respect to their parameters~\cite{liu2020linearity}. The NTK hence significantly facilitates analysis of neural network training dynamics and has been used to effectively characterize neural network training in various contexts, including generative adversarial networks~\cite{franceschi2022neural} and graph neural networks~\cite{du2019graph}.

\section{System Model}
\label{sec:system_model}

\subsection{Setup}
We consider a peer-to-peer learning setting with $Q$ agents. All agents possess a common neural network architecture, $f(\x; \btheta) : \R^{N} \to \R^{M}$, that is parameterized by $\btheta \in \R^P$. Each agent has its own local objective function $L_{q}(\btheta) : \R^P \to \R$ defined using a local dataset $\D_{q} \triangleq \{(\x_{q,i}, \y_{q,i})\}_{i=1}^D$. For example, the local objective $L_{q}(\btheta)$ can be the local empirical risk function that evaluates the performance of $f(\x; \btheta)$ on a task involving the local dataset $\D_{q}$. The agents cooperate to learn the optimal $\btheta^*$ that minimizes the global objective, which is a weighted average of all local objectives. Formally, the agents collaboratively aim to find $\btheta^*$ satisfying: 
\begin{align}
    \btheta^*\!&=\! \argmin_{\btheta \in \R^P} L(\btheta) \label{eq:global_obj}\\ 
    L(\btheta)\!&=\!\frac{1}{Q}\sum_{q=1}^Q L_{q}(\btheta) \!\triangleq\! \frac{1}{Q}\sum_{q=1}^{Q} \frac{1}{D} \sum_{i=1}^{D}\!\ell\! \left(f\!\left(\x_{q, i}; \btheta\right)\!, \y_{q,i}\right) \label{eq:local_obj}
\end{align}
We assume that the communication among the agents can be modeled as an undirected, static, connected, graph $\G(\V, \E)$ where $\V = \{1,2,\dots, Q\}$ is the node set and $\E \subseteq \V \times \V$ is the edge set. An agent $q$ communicates only with its immediate neighbors $\N_q \subseteq \V$, where $q' \in \N_q$ if $(q',q) \in \E$ or if $q'=q$.\footnote{This simplifies notation and accounts for agents using their own local data.}

\subsection{Distributed Learning Algorithms}
Common algorithms for solving~\eqref{eq:global_obj} include variants of distributed gradient descent (DGD)~\cite{nedic2009distributed} based on consensus and diffusion updates~\cite{sayed2014adaptation, vlaski2023networked, kar2008distributed,cattivelli2009diffusion}. Throughout the paper, we refer to the consensus-based algorithm as DGD.
In each iteration of DGD, every agent first averages the parameter estimates of its neighbors. Each agent then updates this averaged value using the gradient of its local objective with respect to its parameter estimate before averaging. Hence, DGD iteratively refines the local estimate of $\btheta^*$ by updates of the following form:
\begin{align}
    \btheta_{q, k+1} &= \sum_{r \in \N_q} w_{qr}\btheta_{r, k} - \eta\left(\frac{\partial L_{q}}{\partial \btheta_{q, k}}\right)^\top\label{eq:dgd_update},\; k \geq 0
\end{align}
where $\btheta_{q,0} \in \R^P$ is the initial estimate of $\btheta^*$ at agent $q$ and $w_{qr}$ denotes the weight that agent $q$ assigns to the parameter estimate of agent $r$. The DGD update in~\eqref{eq:dgd_update} is closely related to the Adapt-Then-Combine (ATC) diffusion update \cite{chen2012diffusion}:
\begin{align}
 \btheta_{q,{k+1}} &= \sum_{r \in \mathcal{N}_q} w_{qr} \left[\btheta_{r,{k}} -\eta \left(\frac{\partial L_r}{\partial \btheta_{r,k}}\right)^\top\right] \label{eq:atc_update}
\end{align}
While each agent in DGD first averages parameter estimates of its neighbors, the agents in ATC diffusion first perform a gradient update step on their local estimates before averaging the estimates of their neighbors.
Reversing the order of the ATC update to average neighboring estimates before applying a gradient update based on the averaged estimate yields the combine-then-adapt (CTA) diffusion update \cite{chen2012diffusion}: 
\begin{align}
    \bs{\psi}_{q, k} &\triangleq \sum_{r \in \mathcal{N}_q} w_{qr} \btheta_{r,k}\\
    \btheta_{q, k+1} &= \bs{\psi}_{q, k} - \eta \left(\frac{\partial L_q}{\partial \bs{\psi}_{q, k}}\right) \label{eq:ctc_update}
\end{align}
Let $\W \in \R^{Q\times Q}$ be the weighted adjacency matrix (with self-loops) that collects the mixing coefficients $w_{qr}$.
Let $\bvtheta_{k} \triangleq [\btheta_{1,k}^\top, \btheta_{2,k}^\top, \cdots, \btheta_{Q,k}^\top]^\top \in \R^{QP}$ stack the parameter estimates across all agents at iteration $k$. We can compactly express the DGD, ATC diffusion, and CTA diffusion updates in ~\eqref{eq:dgd_update},~\eqref{eq:atc_update},~\eqref{eq:ctc_update} as:
\begin{mystyle}{Gradient Updates}
    \begin{subequations}
    \begin{align}
    \calW &\triangleq \W \otimes \I_P\\
    \text{DGD:}\; \bvtheta_{k+1} &= \calW \bvtheta_{k} - \eta\left(\frac{\partial L}{\partial \bvtheta_k}\right)^\top \label{eq:vectorized_dgd}\\
    \text{ATC:}\; \bvtheta_{k+1} &= \calW \left(\bvtheta_{k} -\eta \left(\frac{\partial L}{\partial \bvtheta_{t}}\right)^\top\right) \label{eq:vectorized_atc}\\
    \text{CTA:}\; \bvtheta_{k+1} &=\calW\bvtheta_{k} -\eta \left(\frac{\partial L}{\partial \calW \bvtheta_k}\right)^\top \label{eq:vectorized_cta} 
\end{align}
\end{subequations}
\end{mystyle}

The updates in~\eqref{eq:vectorized_dgd},~\eqref{eq:vectorized_atc}, and~\eqref{eq:vectorized_cta} above are first order difference equations. To derive closed-form solutions for the dynamics associated with neural networks trained using these algorithms, it is convenient to analyze their continuous-time counterparts, namely their gradient flows: 
\begin{mystyle}{Gradient Flows}
    \begin{subequations}
    \begin{align}
    \tcW &\triangleq (\W - \I_{Q}) \otimes \I_P \\
    \text{DGD:}\; \dbvtheta_{t} &=  \tcW \bvtheta_{t} -\eta  \left(\frac{\partial L}{\partial \bvtheta_{t}}\right)^\top \label{eq:dgd_gradflow}\\ 
    \text{ATC:}\; \dbvtheta_{t} &= \tcW \bvtheta_{t} -\eta \calW \left(\frac{\partial L}{\partial \bvtheta_{t}}\right)^\top \label{eq:atc_gradflow}\\
    \text{CTA:}\; \dbvtheta_{t} &=  \tcW\bvtheta_t -\eta \left(\frac{\partial L}{\partial \calW \bvtheta_t}\right)^\top \label{eq:cta_gradflow}
\end{align}
\end{subequations}
\end{mystyle}
The standard gradient updates in~\eqref{eq:vectorized_dgd},~\eqref{eq:vectorized_atc}, and~\eqref{eq:vectorized_cta} can be viewed as Euler discretizations of the continuous time gradient flows in~\eqref{eq:dgd_gradflow},~\eqref{eq:atc_gradflow}, and~\eqref{eq:cta_gradflow} respectively.

\section{Peer-to-Peer Learning Dynamics}
\label{sec:dynamics}
We consider solving the gradient flow equations in~\eqref{eq:dgd_gradflow},~\eqref{eq:atc_gradflow}, and~\eqref{eq:cta_gradflow} to determine the evolution of the parameter estimates $\btheta_{q,t}$ at each agent. Let $f_{q,t}(\x) \triangleq f(\x; \btheta_{q,t})$ denote the neural network $f : \R^N \to \R^M$ evaluated on input $\x$ using the parameter estimate $\btheta_{q,t}$ at agent $q$ and time $t$. We collect the output of $f$ evaluated on all local datasets using the local parameters at time $t$ as follows:
\begin{align}
    f_t(\D_{q}) \in \R^{MD} &\triangleq [f_{q,t}(\x_{q,1})^\top,\cdots,f_{q,t}(\x_{q,D})^\top]^\top \label{eq:local_fvector}\\
    \f_t \in \R^{QMD} &\triangleq [f_t(\D_1)^\top,\cdots,f_t(\D_Q)^\top]^\top \label{eq:global_fvector}
\end{align}
In~\eqref{eq:local_fvector},  $f_t(\D_{q})$ collects the neural network outputs on the local dataset $\D_{q}$ using the local parameters $\btheta_{q,t}$. The vector $\f_t$ in~\eqref{eq:global_fvector} collects all of the vectors $f_t(\D_{q})$ across the $Q$ agents. Solving the gradient flow equations in~\eqref{eq:dgd_gradflow},~\eqref{eq:atc_gradflow}, and~\eqref{eq:cta_gradflow} is challenging in general, as $\left(\frac{\partial L}{\partial \bvtheta_t}\right)^\top = \left(\frac{\partial \f_t}{\partial \bvtheta_t}\right)^\top \left(\frac{\partial L}{\partial \f_t}\right)^\top $ is highly nonlinear in $\btheta_t$ for standard neural networks $f$ due to the term $\left(\frac{\partial \f_t}{\partial \bvtheta_t}\right)^\top$. However, according to NTK theory~\cite{lee2019wide, jacot2018neural}, the term $\left(\frac{\partial \f_t}{\partial \bvtheta_t}\right)^\top$ is linear with respect to $\bvtheta$ in the limit as the neural network width tends to infinity. Therefore, for sufficiently wide neural networks $f$ at each agent, we consider linearizing $\f_{t}$ about the initial parameters $\bvtheta_0$:
\begin{align}
    \overline{\f}_t &\triangleq \f_0 + \frac{\partial \f}{\partial \bvtheta_{0}}\left({\bvtheta}_{t} - \bvtheta_{0}\right) \label{eq:linearization}
\end{align}
For mean squared error (MSE) loss, the dynamics of $\dot{\overline{\f}}_t, \dbvtheta_t$ can be solved in closed form. Let $\ell(\x, \y) = \frac{1}{2}\|\x - \y\|_2^2$ so that the global objective $L(\btheta)$ in~\eqref{eq:local_obj} becomes: 
\begin{align}
      L(\btheta) &= \frac{1}{Q}\sum_{q=1}^{Q} \frac{1}{2D} \sum_{i=1}^{D} \left\|f\left(\x_{q, i}; \btheta\right) - \y_{q,i}\right\|_2^2
\end{align}
Substituting the the linearization $\overline{\f}_t$ into the gradient flow dynamics in \eqref{eq:dgd_gradflow}, \eqref{eq:atc_gradflow}, and \eqref{eq:cta_gradflow} yields the following linearized gradient flows:

\begin{mystyle}{Linearized Gradient Flows for MSE Loss}
    \scriptsize 
    \begin{subequations}
    \begin{align}
    \bPsi_0 &\triangleq \left(\frac{\partial \f}{\partial \bvtheta_{0}}\right)^\top \left(\frac{\partial \f}{\partial \bvtheta_{0}}\right)\\
    \text{DGD:}\;\dbvtheta_{t}\!&=\!\left[\tcW - \frac{\eta}{DQ} \bPsi_0\right] \bvtheta_t \!- \!\frac{\eta}{DQ} \left[ \left(\frac{\partial {\f}}{\partial \bvtheta_0}\right)^\top (\f_0 - \mathsf{y}) - \bPsi_0 \bvtheta_0 \right] \label{eq:dgd_lin_gradflow}\\
    \text{ATC:}\;\dbvtheta_{t}\!&=\!\left[\tcW\!- \!\frac{\eta}{DQ} \calW \bPsi_0\right]\bvtheta_t\!  -\!  \frac{\eta}{DQ} \calW \left[\left(\frac{\partial {\f}}{\partial \bvtheta_0}\right)^\top (\f_0 \!-\! \mathsf{y})\!-\!\bPsi_0 \bvtheta_0 \right] \label{eq:atc_lin_gradflow} \\
    \text{CTA:}\;\dbvtheta_{t}\!&=\!\left[\tcW- \frac{\eta}{DQ} \bPsi_0 \calW \right]\bvtheta_t  - \frac{\eta}{DQ} \left[ \left(\frac{\partial {\f}}{\partial \bvtheta_0}\right)^\top (\f_0 - \mathsf{y}) - \bPsi_0 \bvtheta_0 \right] \label{eq:cta_lin_gradflow}
\end{align}
\end{subequations}
\end{mystyle}
The linearized gradient flows for MSE loss can be solved using the general solution:
\begin{align}
    \bPhi(t, t_0) &= \exp \left(\int_{t_0}^t \mathbf{A}(s) ds\right) = \exp\left(\mathbf{A}(t - t_0)\right) \label{eq:state_transition}\\ 
    \bvtheta_t &= \bPhi(t,0)\bvtheta_0 + \left(\int_0^t \bPhi (t, \tau) d\tau\right) \mathbf{u} \label{eq:general_solution}
\end{align}
where the state transition matrix $\bPhi(t, t_0)$ has the form~\eqref{eq:state_transition} since the state matrices $\mathbf{A} = \tcW - \eta \bPhi_0$, $\mathbf{A} = \tcW - \eta \calW \bPhi_0$, and $\mathbf{A} = \tcW - \eta \bPhi_0 \calW$ in~\eqref{eq:dgd_lin_gradflow},~\eqref{eq:atc_lin_gradflow}, and ~\eqref{eq:cta_lin_gradflow}, respectively, are all time-invariant. In~\eqref{eq:general_solution}, $\mathbf{u}$ corresponds to the constant, rightmost terms in~\eqref{eq:dgd_lin_gradflow},~\eqref{eq:atc_lin_gradflow}, and ~\eqref{eq:cta_lin_gradflow}.  We note that the gradient flow equations with the linearized neural network~\eqref{eq:linearization} can also be accurately solved for other loss functions, e.g., cross entropy, by using numerical integration \cite{lee2019wide}. 
We next analyze the bounded input bounded output (BIBO) stability of the dynamics associated with the aforementioned algorithms. For the sake of brevity, we include the DGD stability proof. 

\begin{prop}[DGD Gradient Flow Stability]
    Let $\W$ be doubly stochastic. Then the gradient flow of $\bvtheta_t$ in~\eqref{eq:dgd_lin_gradflow} is BIBO stable if the system is minimal.
\end{prop}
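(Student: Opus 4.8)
I would first recast the linearized flow \eqref{eq:dgd_lin_gradflow} as an LTI system $\dbvtheta_t = \mathbf{A}\bvtheta_t + \mathbf{B}\mathbf{u}_t$ with state matrix $\mathbf{A} = \tcW - \tfrac{\eta}{DQ}\bPsi_0$, the (bounded) input $\mathbf{u}_t$ playing the role of the constant forcing term in \eqref{eq:dgd_lin_gradflow}, and output the linearized predictions $\overline{\f}_t$, so that $\mathbf{C} = \tfrac{\partial\f}{\partial\bvtheta_0}$. For a minimal realization the poles of the transfer matrix $\mathbf{C}(s\I-\mathbf{A})^{-1}\mathbf{B}$ are exactly the eigenvalues of $\mathbf{A}$, so BIBO stability is equivalent to $\mathbf{A}$ being Hurwitz; note that the implication ``$\mathbf{A}$ Hurwitz $\Rightarrow$ BIBO'' needs no minimality at all (it follows from the exponential decay of $\bPhi(t,\tau)$ in \eqref{eq:state_transition} together with boundedness of $\mathbf{C}$), so the entire content is to use minimality to keep $\mathrm{spec}(\mathbf{A})$ off the imaginary axis.

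The plan then has two steps. First I would localize $\mathrm{spec}(\mathbf{A})$: $\bPsi_0 = \big(\tfrac{\partial\f}{\partial\bvtheta_0}\big)^\top\big(\tfrac{\partial\f}{\partial\bvtheta_0}\big)\succeq 0$ as a Gram matrix, and since $\W$ is doubly stochastic, $\|\W\|_2\le 1$ by Birkhoff--von Neumann (immediate if $\W$ is symmetric), so $\W+\W^\top\preceq 2\I_Q$ and $\tcW+\tcW^\top = (\W+\W^\top-2\I_Q)\otimes\I_P\preceq 0$. Hence the symmetric part $\tfrac12(\mathbf{A}+\mathbf{A}^\top)$ is negative semidefinite, which forces $\mathrm{Re}(\lambda)\le 0$ for every eigenvalue $\lambda$ of $\mathbf{A}$. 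Using connectivity of the graph and the self-loops (aperiodicity), $1$ is a simple eigenvalue of $\tfrac12(\W+\W^\top)$ with eigenvector $\mathbf{1}_Q$, so $\ker\big(\tcW+\tcW^\top\big) = \mathrm{span}\{\mathbf{1}_Q\}\otimes\R^P$. An eigenvector $v$ with $\mathrm{Re}(\lambda)=0$ must lie in the kernel of the negative-semidefinite symmetric part, i.e. in $\big(\mathrm{span}\{\mathbf{1}_Q\}\otimes\R^P\big)\cap\ker\bPsi_0$; since $\tcW(\mathbf{1}_Q\otimes\mathbf{v}_0) = \big((\W-\I_Q)\mathbf{1}_Q\big)\otimes\mathbf{v}_0 = \mathbf{0}$ and $\bPsi_0 v = \mathbf{0}$, we get $\mathbf{A}v = \mathbf{0}$, so in fact $\lambda = 0$ and $v = \mathbf{1}_Q\otimes\mathbf{v}_0$ with $\bPsi_0(\mathbf{1}_Q\otimes\mathbf{v}_0) = \mathbf{0}$.

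Second, I would exclude this zero eigenvalue using minimality. Because agent $q$'s outputs $f_t(\D_q)$ depend only on $\btheta_{q,t}$, the Jacobian $\tfrac{\partial\f}{\partial\bvtheta_0}$ is block diagonal with blocks $\mathbf{J}_q \triangleq \tfrac{\partial f_0(\D_q)}{\partial\btheta_{q,0}}\in\R^{MD\times P}$, so $\bPsi_0$ is block diagonal with blocks $\mathbf{J}_q^\top\mathbf{J}_q$, and $\bPsi_0(\mathbf{1}_Q\otimes\mathbf{v}_0) = \mathbf{0}$ iff $\mathbf{J}_q\mathbf{v}_0 = \mathbf{0}$ for every $q$. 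But then $\mathbf{C}(\mathbf{1}_Q\otimes\mathbf{v}_0)$ merely stacks the vectors $\mathbf{J}_q\mathbf{v}_0 = \mathbf{0}$, so $v = \mathbf{1}_Q\otimes\mathbf{v}_0\in\ker\mathbf{C}$ is an eigenvector of $\mathbf{A}$ annihilated by $\mathbf{C}$; by the PBH test $(\mathbf{A},\mathbf{C})$ is unobservable, so the realization is not minimal. (Equivalently, one can check the same $v$ is unreachable, since the forcing in \eqref{eq:dgd_lin_gradflow} lies in $\mathrm{range}\big((\tfrac{\partial\f}{\partial\bvtheta_0})^\top\big) = (\ker\mathbf{C})^\perp$.) Contrapositively, minimality forces $\bigcap_{q}\ker\mathbf{J}_q = \{\mathbf{0}\}$, hence $\ker\mathbf{A} = \{\mathbf{0}\}$, hence by Step 1 no imaginary-axis eigenvalue, so $\mathbf{A}$ is Hurwitz and the system is BIBO stable.

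I expect the spectral-localization step to be essentially routine once doubly stochasticity is invoked, modulo mild care in handling non-symmetric $\W$ (via Birkhoff) and in verifying that $\ker\tcW$ is exactly the consensus subspace (where connectivity and the self-loops enter). The main obstacle is the second step: cleanly fixing what ``the system'' and ``minimal'' mean for \eqref{eq:dgd_lin_gradflow} --- i.e. committing to the input and output maps --- and then showing that the unique possible marginally stable mode is a consensus direction lying in every agent's Jacobian kernel, hence structurally invisible at the output (and unreachable from the input), so that the minimality hypothesis is precisely what rules it out.
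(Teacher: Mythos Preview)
Your approach is essentially the paper's: show the state matrix has spectrum in the closed left half-plane via (semi)definiteness, identify any marginally stable mode as a consensus direction $\mathbf{1}_Q\otimes\mathbf{v}_0$ lying in $\ker\bPsi_0$, and then invoke minimality to rule it out. Your version is in fact more carefully stated in two respects: (i) the paper tacitly treats $\tcW$ as symmetric negative semidefinite, whereas you handle a potentially non-symmetric $\W$ by passing to the symmetric part and invoking Birkhoff--von Neumann; and (ii) the paper reads ``minimal'' informally as ``every parameter affects the output'' and argues that a zero mode forces some \emph{column} of $\bPsi_0(\mathbf{1}_Q\otimes\I_P)$ to vanish (strictly speaking one only gets a nontrivial linear combination of columns vanishing), while you use the PBH observability test to give the minimality hypothesis a precise control-theoretic meaning and deduce the contradiction from $\mathbf{C}(\mathbf{1}_Q\otimes\mathbf{v}_0)=\mathbf{0}$ directly.
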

\begin{proof}
    $\W$ is doubly stochastic, so $\tcW$ has maximum eigenvalue $\lambda = 0$ with multiplicity $P$ and corresponding eigenvectors $\bs{1}_Q \otimes \I_P$. Since $\tcW$ is negative semidefinite, $\tcW - \eta \bPsi_0$ is negative semidefinite. Suppose $\tcW - \eta \bPsi_0$ has a zero eigenvalue with corresponding nonzero eigenvector $\x \in \R^{PQ}$. Then there exists nonzero $\x \in \R^{PQ}$ such that $\x^\top (\tcW - \eta \bPsi_0)\x = 0$, i.e., $\x$ simultaneously lies in the nullspaces of $\tcW$ and $\bPsi_0$. The columns of $\bs{1}_Q \otimes \I_P$ form a basis for the null space $\mathcal{N}(\tcW)$. $\mathcal{N}(\tcW) \cap \mathcal{N}(\bPsi_0)$ is nontrivial only if a column of $\bPsi_0 (\bs{1}_Q \otimes \I_P)$ is zero. Since $\bPsi_0$ is block diagonal,  $\bPsi_0 (\bs{1}_Q \otimes \I_P) = \begin{bmatrix}\frac{\partial f_t(\D_1)}{\partial \theta_{1,0}}^\top, \dots, \frac{\partial f_t(\D_1)}{\partial \theta_{Q,0}}^\top \end{bmatrix}^\top$. If a column of the RHS is zero, there exists a parameter that does not influence the neural network's output. This contradicts minimality of the system.
\end{proof}

\section{Experiments}
\label{sec:experiments}
In this section, we validate the analytical expressions derived in Section~\ref{sec:dynamics} for the training dynamics corresponding to DGD with consensus updates. For all experiments, we consider peer-to-peer networks with synchronous, bidirectional, and noiseless device-to-device (D2D) communication channels. From a global perspective, we model the network as an undirected, flat, and connected communication graph with devices as vertices and D2D links as edges.
We employ Metropolis-Hastings mixing weights that produce symmetric doubly stochastic mixing matrices~\cite{gossip-metropolis-hastings-mixing}. At the start of training, we identically initialize the models at all agents using max norm synchronization~\cite{pranav2024}.

\subsection{Affine Models}
We first demonstrate that the linearized gradient flow in~\eqref{eq:dgd_lin_gradflow} accurately describes the training dynamics of DGD by considering an affine form for $f$ in~\eqref{eq:local_obj}.
We consider a setting with $Q=8$ agents. Each agent has a local dataset of $D=200$ that are independently and identically sampled from the subset of the MNIST dataset \cite{lecun2010mnist} containing handwritten digits of zeros and ones. The agents collaboratively train an affine model to classify images to match the labels $y_{q,i}$, which are either $0$ or $1$, using mean squared error (MSE) loss. While, in practice, classification is usually performed using cross-entropy loss, here we treat the classification task as regression, as in~\cite{lee2019wide}, to facilitate solving the corresponding gradient flow equation. The global optimization objective becomes
\begin{align}
    L(\mathbf{w}, b) &= \frac{1}{Q} \sum_{q=1}^Q \frac{1}{2D} \sum_{i=1}^D (\mathbf{w}^\top \x_{q,i} + b - y_{q,i})^2 \label{eq:linear_exp}
\end{align}

First, we simulate the agents optimizing the global objective. The local parameter estimates at each agent are updated using the discrete gradient updates in~\eqref{eq:dgd_update} for 200 iterations and with step size $\eta = 10^{-4}$. In this experiment, the agents communicate their parameters along a fixed, complete graph. We compare these observed dynamics to the those predicted by solving the linearized gradient flow equation for DGD in~\eqref{eq:dgd_lin_gradflow}. The MSE training loss dynamics for each agent are shown in Figure~\ref{fig:linear}, with each color representing the dynamics corresponding to a specific agent. The dashed lines indicate the predicted loss values obtained by solving the linearized gradient flow in~\eqref{eq:dgd_lin_gradflow}. The parameters obtained by solving the gradient flow are then substituted into the affine classifier to yield the predicted MSE loss. The true loss values observed while training the classifier across the agents are given by the solid lines.
Since the classifier model is affine, the solution to~\eqref{eq:dgd_lin_gradflow} exactly corresponds to the observed loss dynamics.
\begin{figure}[htpb]
    \centering
    \includegraphics[width=\linewidth]{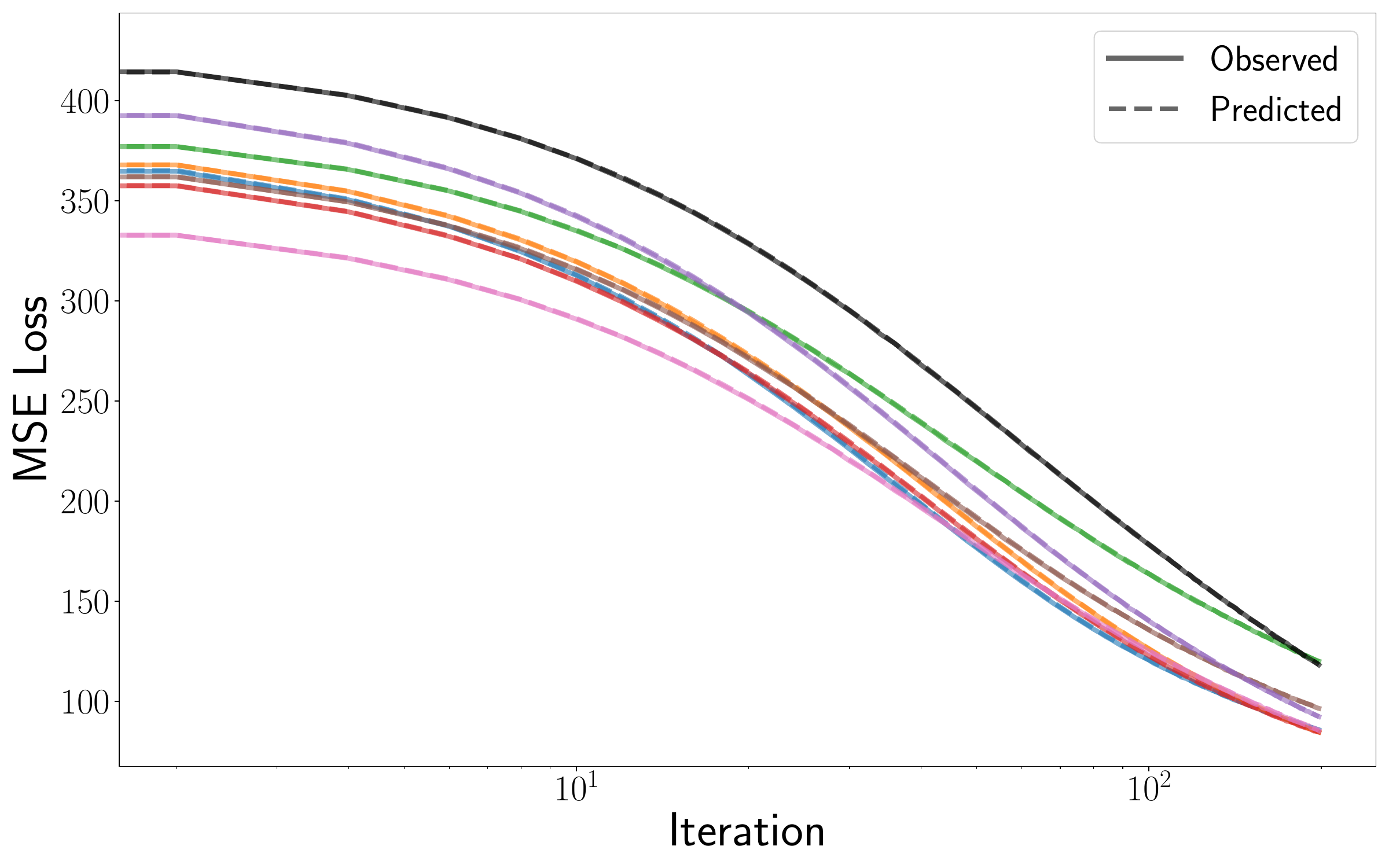}
        \caption{Loss dynamics for each agent in a complete network communication graph solving~\eqref{eq:linear_exp} with DGD (affine classifier $f$).}
    \label{fig:linear}
\end{figure}

\subsection{Neural Networks}
Similar to the previous experiment, we now examine a scenario with $Q=8$ agents where each agent has a neural network classifier. The neural networks are of the form described in~\eqref{eq:ntk_nn}, with a single hidden layer of dimension 256 and sigmoid activation function. The weights and biases are all initialized with standard deviations $s_W= 1$ and $s_b = 0.1$ respectively. Each agent is trained to classify the half moons dataset~\cite{rozza2014novel} using $D=200$ independently and identically distributed samples. The neural network parameter estimates at each agent are updated for 200 iterations using the DGD update in~\eqref{eq:dgd_update}, with step size $\eta = 10^{-4}$. We solve the DGD gradient flow equation in~\eqref{eq:dgd_lin_gradflow} to predict the loss dynamics of the neural networks at each agent during training. The predicted and observed dynamics corresponding to cycle, star, and complete graph communication networks are shown in Figure~\ref{fig:cycle},~\ref{fig:star}, and~\ref{fig:complete} respectively. We observe that although the neural networks are nonconvex in their parameters, our analytical expressions accurately track the learning dynamics of the parameters and therefore accurately characterize the error decay during training. 

\section{Conclusion}
In this paper, we provided an explicit characterization of the learning dynamics of wide neural networks. We considered the continuous time generalizations of consensus and diffusion-based distributed learning algorithms, and derived the gradient flow dynamics of neural network parameters trained with these methods. We analyzed the stability of DGD and empirically demonstrated that our analytical expressions accurately model the training dynamics of such algorithms, even when the models considered are highly nonconvex with respect to their parameters. Our work provides insight for practitioners and aids in the tuning and deployment of effective peer-to-peer learning algorithms. Our proposed framework offers a foundation for further research analyzing nonconvex optimization algorithms in wireless peer-to-peer environments.

\begin{figure}[htpb]
    \centering
    \begin{subfigure}{0.49\textwidth}
     \centering
    \includegraphics[width=1\linewidth]{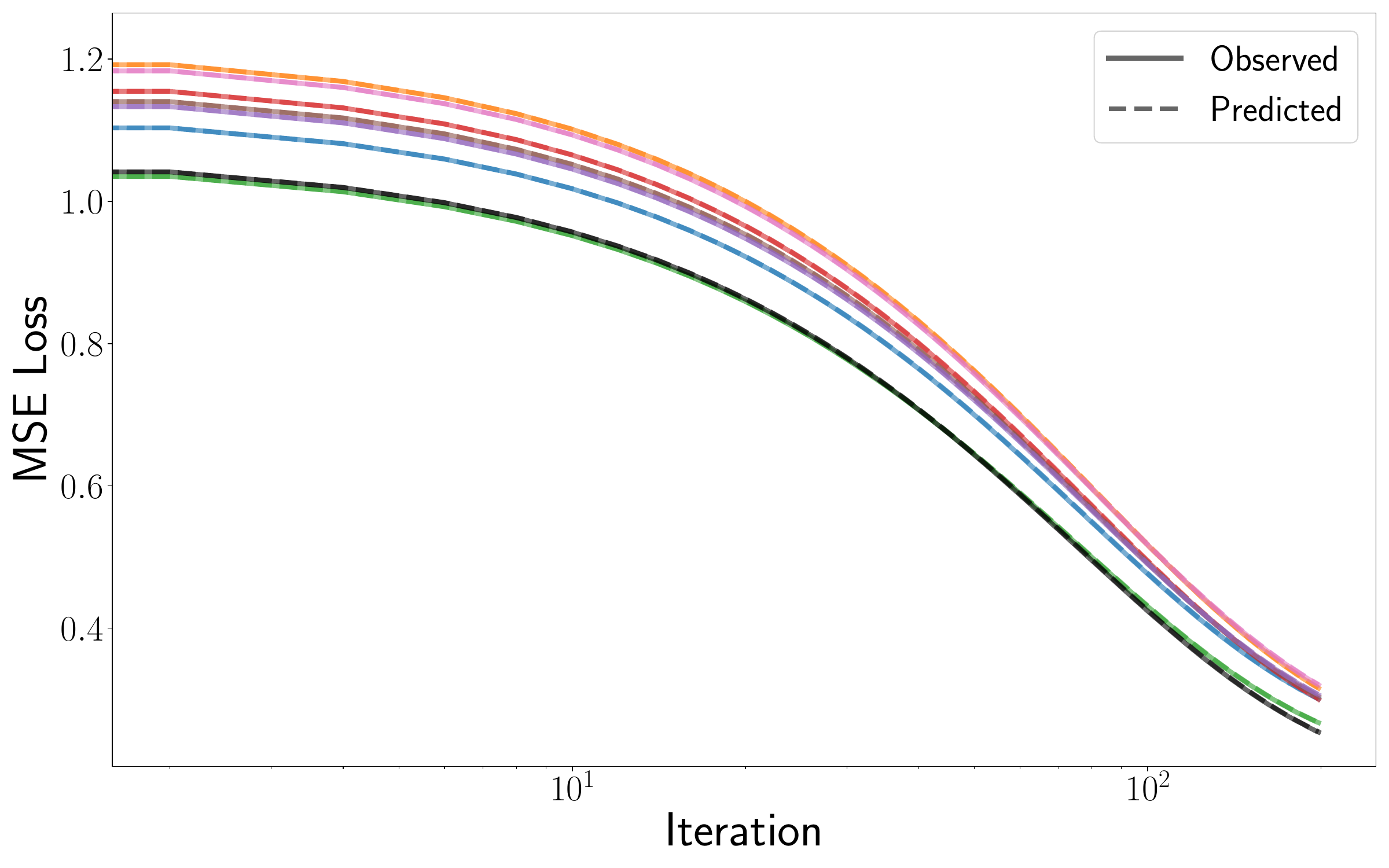}
        \caption{}
    \label{fig:cycle}
    \end{subfigure}
    \hfill
    \begin{subfigure}{0.49\textwidth}
    \centering
    \includegraphics[width=1\linewidth]{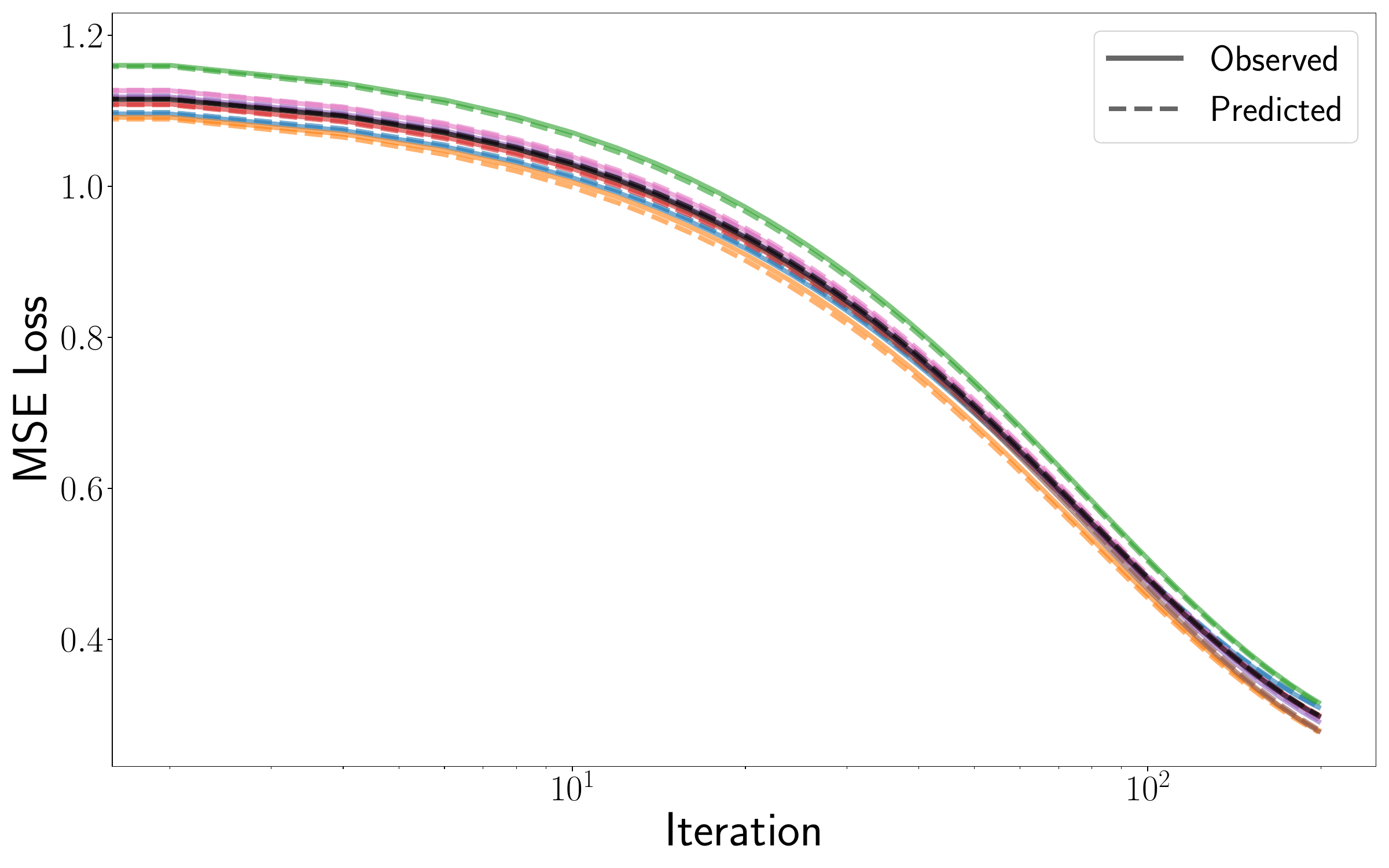}
        \caption{}
    \label{fig:star}
    \end{subfigure}
    \hfill
    \begin{subfigure}{0.49\textwidth}
    \centering
    \includegraphics[width=1\linewidth]{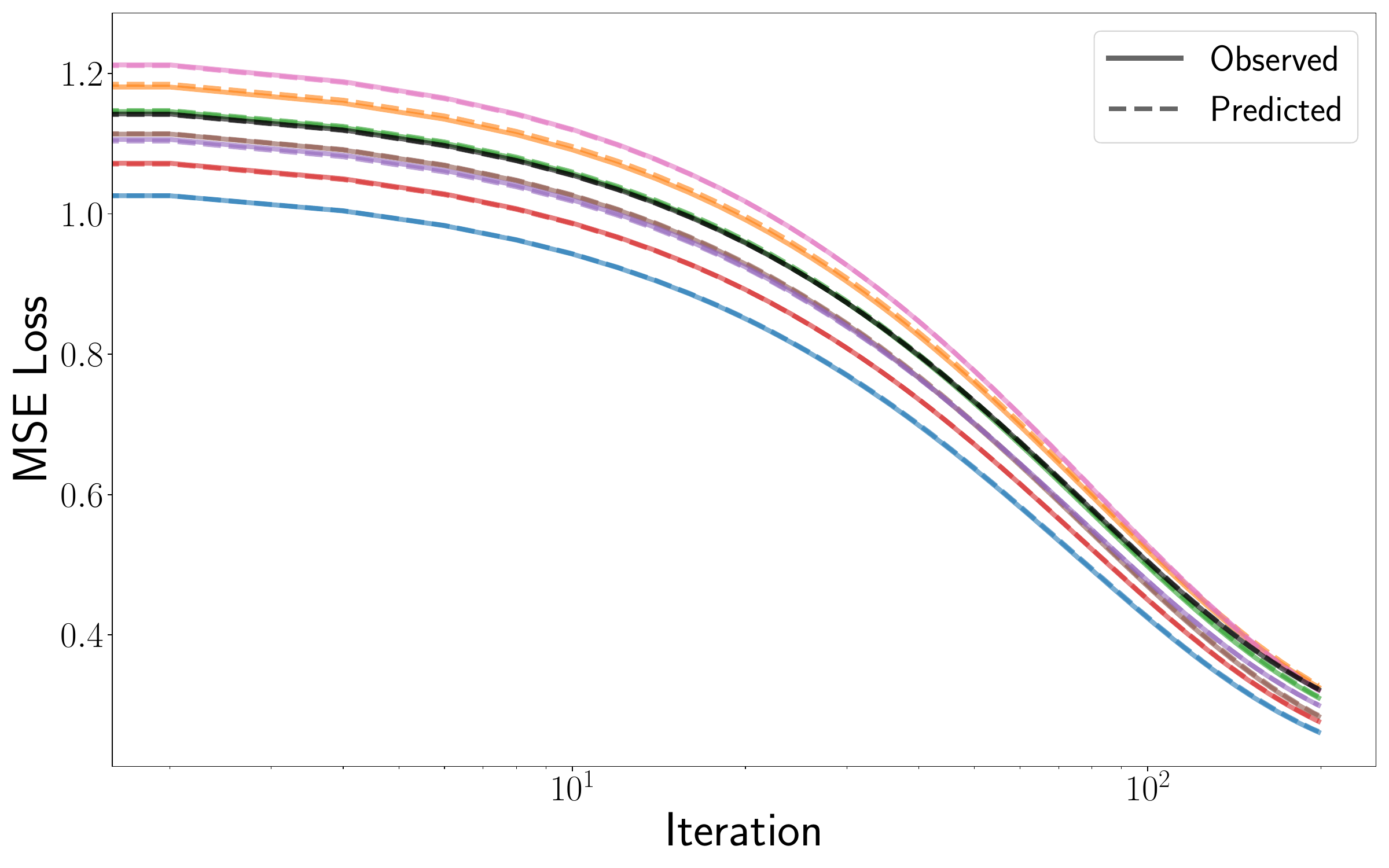}
        \caption{}
    \label{fig:complete}
    \end{subfigure}
    \caption{Loss dynamics for each agent for solving~\eqref{eq:linear_exp} with DGD and neural network classifier $f$ over (a) cycle graph (b) star graph and (c) complete graph communication networks.}
    \label{fig:overall}
\end{figure}

\newpage
\bibliographystyle{IEEEtran}
\bibliography{refs}

\end{document}